\numberwithin{equation}{section}  
\numberwithin{figure}{section} 
\newtheorem{proposition}{P{\scriptsize ROPOSITION}}[section]
\newtheorem{theorem}[proposition]{T{\scriptsize HEOREM}}
\newtheorem{lemma}[proposition]{L{\scriptsize  EMMA}}
\newtheorem{definition}{D{\scriptsize  EFINITION}}[section]
\newtheorem{example}{E{\scriptsize  xample}}[section]
\newtheorem{model}{M{\scriptsize odel}}
\def\H{\mathbb{H}}
\title {\Large{\bf  
Sample-Relaxed   Two-Dimensional Color Principal Component Analysis for Face Recognition and Image Reconstruction\thanks{This paper was jointly supported by National Natural Science Foundation of China, Grant Nos. 61403155, 61473299, 61375067, 61773384 and 11771188. E-mails: zhaomeixiang2008@126.com (M.-X. Zhao),  zhgjia@jsnu.edu.cn (Z.-G. Jia), dwgong@vip.163.com (D. Gong)}
}}
\author{Mei-Xiang Zhao$^1$,
Zhi-Gang Jia$^{2}$,
Dunwei Gong$^{3}$
 \\
$1,3.$ School of Information and Control Engineering, \\
China University of Mining and Technology,
Xuzhou 221116, China\\
$2.$ School of Mathematics and Statistics,
\\ Jiangsu Normal University,
Xuzhou 221116, China
}
\date{}
\begin{document}
\maketitle

\begin{abstract}
A sample-relaxed  two-dimensional color principal component analysis (SR-2DCPCA) approach  is presented for face recognition and image reconstruction based on quaternion models. A relaxation vector is  automatically generated according to the variances of training color face images with the same label. A sample-relaxed, low-dimensional covariance matrix  is constructed based on all the training samples  relaxed by a relaxation vector, and its eigenvectors corresponding to the $r$ largest eigenvalues are defined as  the optimal projection. The SR-2DCPCA aims to enlarge the global variance rather than  to maximize the variance of the projected training samples. The numerical results based on real face data sets validate that  SR-2DCPCA has a higher recognition rate  than state-of-the-art methods and is efficient in image reconstruction.
\end{abstract}

{\bf Key words.}  Face recognition; Image reconstruction;   Eigenface;  Quaternion matrix.

\section{Introduction}
\noindent
In this paper,  we present a sample-relaxed color two-dimensional principal component analysis (SR-2DCPCA) approach for face recognition and  image reconstruction based on quaternion models.   Different from 2DPCA \cite{yzfy04}  and  2DCPCA  \cite{jlz17}, 
 SR-2DCPCA   utilizes  the variances of training samples with the same label,  aims to maximize the variance of the whole (training and testing) projected  images, and has comparable  feasibility and effectiveness  with state-of-the-art methods.

 Color information is one of the most important characteristics in reflecting structural information of a face image. It can help human being to accurately identify, segment or watermark color images (see \cite{bwzc15}, \cite{cyjz17}-\cite{hoda17}, \cite{shfu07,tjlg15,wlccw15,xlch16,zwsccss16,zzltl15} for example). Various  traditional methods of (grey) face recognition has been improved by fully exploiting color cues. Torres et al. \cite{trl99}  applied the traditional PCA into R, G and B color channels, respectively,  and got a fusion of the  recognition results from  three color channels. 
 Xiang, Yang and Chen \cite{xyc15} proposed a CPCA approach for color face recognition. They utilized a color image matrix-representation model based on the framework of PCA  and applied 2DPCA to compute the optimal projection for feature extraction. 
 Recently,  the quaternion PCA (QPCA) \cite{bisa03},  the two-dimensional QPCA (2DQPCA)  and bidirectional 2DQPCA \cite{scy11},   
 and kernel QPCA (KQPCA) and two-dimensional KQPCA \cite{cyjz17}, 
 have been proposed to generalize the conventional PCA and 2DPCA to color images, using the quaternion representation.  
These approaches have achieved a significant success in promoting the robustness and the ratio of face recognition by utilizing color information.

The PCA-like methods for face recognition are based on linear dimension-reducing projection. The projection directions are exactly orthonormal eigenvectors of  the covariance matrix of training samples, with the aim to maximize the total scatter of the projected training images. Sirovich and Kirby \cite{siki87,kisi90}  applied the PCA approach to the  representation of (grey) face images, and  asserted that  any face image can  be approximately  reconstructed by  a facial basis  and a mean face image.  Based on this assertion,  Turk and Pentland \cite{tupe91} proposed a well-known eigenface method  for  face recognition. Following that,  many properties  of PCA  have been studied and  PCA has gradually become one of the most powerful  approaches  for face recognition  \cite{pent00}.  As a breakthrough development, Yang et al. \cite{yzfy04}  proposed the 2DPCA approach, which constructs the  covariance matrix by directly using 2D face image matrices. Generally,  the  covariance matrix in 2DPCA is of a smaller dimension than that in PCA, which reduces storage and the computational operations. Moreover, the 2DPCA approach is convenient to  apply spacial information of face images, and achieves a higher  face recognition rate than PCA in most cases.

With retaining the advantages of 2DPCA,  2DCPCA recently proposed in \cite{jlz17} is based on quaternion matrices rather than quaternion vectors, and hence can fully utilize color information and the spacial structure of  face images. In this method, the generated covariance matrix is  Hermitian  and low-dimensional. The projection directions are eigenvectors of the covariance matrix corresponding to the $r$ largest eigenvalues.  We find that 2DCPCA ignores label information (if provided) and scatter of training samples with the same label.  If these information is considered when constructing a covariance matrix,  the recognition rate of  2DCPCA  will be improved. As far as our knowledge, there has been no approaches of face recognition  based on the framework of 2DCPCA using these information.

The paper is organized as follows. In section \ref{s:2DCPCA}, we  recall two-dimensional color principle component analysis  approach  proposed in \cite{jlz17}.   In section \ref{s:SR-2DCPCA}, we present a sample-relaxed  two-dimensional color principal component analysis  approach based on quaternion models.  In section \ref{s:varianceupdating}, we provide the theory of the new approach. 
In section \ref{s:experiments}, numerical experiments are conduct by applying the Georgia Tech face  database and the  color FERET face  database. Finally, the conclusion is draw in section \ref{s:conclusion}.

\section{ 2DCPCA}\label{s:2DCPCA}
We firstly recall the two-dimensional color principle component analysis (2DCPCA) from \cite{jlz17}.

 Suppose that an $m\times n$ quaternion matrix is in the form of $Q=Q_0+Q_1i+Q_2j+Q_3k$,   where $i,~j,~k$ are three imaginary values satisfying
   \begin{equation}\label{e1}
   i^2=j^2=k^2=ijk=-1,
\end{equation}   
 and   $Q_s \in \mathbb{R}^{m\times n}$, $s=0,1,2,3$. A  {\it pure quaternion matrix } is a matrix  whose elements are pure quaternions or zero.  In the RGB color space, a  pixel can be represented with a pure quaternion,  $Ri+Gj+Bk$,  where  $R,~G,~B$  stand for the values of Red, Green and Blue components, respectively.  An $m\times n$ color image can be saved as an $m\times n$  pure quaternion matrix, $Q=[q_{ab}]_{m\times n}$,  in which an element,  $q_{ab}=R_{ab}i+G_{ab}j+B_{ab}k$,  denotes one color pixel, and  $R_{ab}$, $G_{ab}$ and $B_{ab}$ are nonnegative integers \cite{pcd03}.  

Suppose that there are $\ell$  training color image samples in total, denoted as $m\times n$ pure quaternion matrices,  $F_1,F_2,...,F_\ell$,  and the mean  image of all the training color images can be defined as follows
 \begin{equation}\label{e:psi}\Psi=\frac{1}{\ell}\sum\limits_{s=1}^{\ell}F_s\in\mathbb{H}^{m\times n}.
 \end{equation} 
 \begin{center}
 \fbox{%
\parbox{0.9\textwidth}{
\begin{model}[\bf 2DCPCA]\label{m:2DCPCA}
\begin{itemize} 
\item[$(1)$] 
Compute the following {\it color image covariance matrix} of training samples 
\begin{equation}\label{e:Gt}
G_t=\frac{1}{\ell}\sum_{s=1}^{\ell}(F_s-\Psi)^*(F_s-\Psi)\in\H^{n\times n}.
\end{equation}
\item[$(2)$] Compute the $r$ $(1\le r\le n)$ largest eigenvalues of $G_t$ and their corresponding  eigenvectors (called eigenfaces), denoted as $(\lambda_1, v_1),$ $\ldots,$ $(\lambda_r, v_r)$.  Let the eigenface subspace
be $V={\rm span}\{v_1,$ $\ldots,$ $v_r\}$.
\item[$(3)$]  Compute the projections of $\ell$ training color face images in the subspace, $V$,
\begin{equation}\label{e:ps4fs}P_s=(F_s-\Psi)V \in\mathbb{H}^{m\times r},\ s=1,\cdots,\ell.\end{equation}
\item[$(4)$] For a given testing sample, $F$,  compute its feature matrix, $P=(F_s-\Psi)V$. 
 Seek the nearest face image, $F_s$ $(1\le s\le \ell)$, whose feature matrix satisfies that
$\|P_s-P\|=\min$.  $F_s$ is output as the person to be recognized.
\end{itemize}
\end{model}
}
}
\end{center}

2DCPCA based on quaternion models can preserve color and spatial information of face images, and its computational complexity of quaternion operations is similar to the computational complexity of real operations  of 2DPCA  (proposed in \cite{yzfy04}).

\section{Sample-relaxed 2DCPCA}\label{s:SR-2DCPCA}
\noindent
In this section,  we present a sample-relaxed 2DCPCA approach based on quaternion models for  face recognition.

Suppose that all the $\ell$ color image samples of the training set can be partitioned into $x$ classes with each containing $\ell_a$ $(a=1,\ldots, x)$ samples:
$$F_1^1,\cdots,F_{\ell_1}^1\ | \ F_1^2,\cdots,F_{\ell_2}^2\ |\  \cdots \ |\  F_1^x,\cdots,F_{\ell_x}^x,$$
 where $F_b^a$ represents the $b$-th sample of the $a$-th class. 
 Now, we define a  relaxation  vector using label  and variance within a class, and  generate a sample-relaxed covariance matrix of training set in the following.

\subsection{The  relaxation vector}\label{ss:relaxvector}
Define the mean image of training samples from the $a$-th class by  
 $$\Psi_a=\frac{1}{\ell_a}\sum\limits_{s=1}^{\ell_a}F_s^a\in\H^{n\times n},$$
 and  
 the $a$-th within-class covariance matrix  by 
 \begin{equation}\label{e:withincovariancematrix}
N_a=\frac{1}{\ell_a}\sum\limits_{s=1}^{\ell_a}(F_s^a-\Psi_a)^*(F_s^a-\Psi_a)\in\H^{n\times n},
\end{equation}
where $a=1,\ldots,x$ and $\sum_{a=1}^{x}\ell_a=\ell$.

The within-class covariance matrix, $N_a$,  is a Hermitian quaternion matrix, and  is semi-definite positive, with its eigenvalues being nonnegative. The maximal eigenvalue of  $N_a$, denoted as $\lambda_{\rm max}(N_a)$,  represents the variance of training samples, 
$F_1^a,\ldots,F_{\ell_a}^a$,  in the principal component. Generally, the larger  $\lambda_{\rm max}(N_a)$ is, the better scattered the training samples of the $a$-th class are. If $\lambda_{\rm max}(N_a)=0$, all the training samples in the $a$-th class are same, and the contribution of the $a$-th class  to the covariance matrix of the training set should be controlled by a small relaxation factor.

Now, we define a  relaxation vector for the training classes.
\begin{definition}\label{d:relvec}  Suppose that the training set has $x$ classes and the covariance matrix of the $a$-th class is $N_a$ $(1\le a\le x)$. Then the relaxation vector can be defined as  
 \begin{equation}\label{e:w}
 W=[w_1,\cdots,w_x]\in\mathbb{R}^n,   
 \end{equation}
 where 
 $$w_a=\frac{e^{\lambda_{\rm max}(N_a)}}{\sum_{b=1}^{x}e^{\lambda_{\rm max}(N_b)}}$$
 is  the relaxation factor of the $a$-th class. 
 \end{definition}
 %
  If the $a$-th class contains   $\ell_a~(\ge 1)$  training samples,  the relaxation factor of each sample   can be calculated as $w_a/\ell_a$. If each training class has only one  sample, i.e.,  $\ell_1=\ldots=\ell_x=1$,   all the within-class covariance matrices will be zero matrices,  and  
$$\lambda_{\rm max}(N_1)=\cdots=\lambda_{\rm max}(N_x)=0.$$  
In this case,  the relaxation factor of  each  class will be $w_a=1/x$,  and so will its unique sample.

\subsection{The relaxed covariance  matrix and eigenface subspace}\label{ss:covariance}
Now,  we define the relaxed covariance matrix of training set. 
\begin{definition}\label{d:Gw}
With  the  relaxation vector,   $W=[w_1,\cdots,w_n]^T\in\mathbb{R}^n$,  defined by  \eqref{e:w}, 
  the  {\it relaxed covariance matrix} of training set can be defined as  follows
\begin{equation}\label{e:Gw}
G_w=\frac{1}{\ell}\sum_{a=1}^x\left(\frac{w_a}{\ell_a}\sum_{s=1}^{\ell_a}(F_s^a-\Psi)^*(F_s^a-\Psi) \right)\in\H^{n\times n},
\end{equation}
where  $\ell_a$ means the number of training samples in the $a$-th class,  $\sum_{a=1}^x\ell_a=\ell$, and $\Psi$ is defined by \eqref{e:psi}.
\end{definition}
If there is no label information, or people are unwilling to use such information,   the training set can be regarded as containing $\ell$ classes,  with each having one sample, i.e.,  $x=\ell$ and $\ell_a=1$.  In this case,  the relaxed covariance matrix,  $G_w$, defined by \eqref{e:Gw} is exactly the covariance matrix, $G_t$, defined by \eqref{e:Gt}.

\begin{definition}\label{d:JV}Suppose that $G_w$ is the relaxed covariance matrix of training set.  The {\it generalized total scatter criterion} can be defined as follows
\begin{equation}\label{e:gtsc}J(V)={\rm trace}(V^*G_wV)=\sum_{s=1}^{r}v_s^*G_wv_s,\end{equation}
where  $V=[v_1,\cdots, v_r]\in \mathbb{H}^{n\times r}$ is  a  quaternion matrix with unitary column  vectors.
\end{definition}
  Note that  $J(V)$ is real and nonnegative since  the quaternion matrix, $G_w$, is Hermitian and positive semi-definite. Our aim is to select the  orthogonal  projection axes, $v_1,\ldots,v_r$,   to maximize the criterion, $J(V)$, i.e.,  
\begin{equation}
\begin{array}{l}
\{v_1^{opt},\ldots,v_r^{opt}\}={\rm arg}\max \sum_{s=1}^{r}v_s^*G_wv_s\\
\ {\rm s.t.}\ \ 
v_s^*v_t=\left\{
\begin{array}{l}  1,~ s=t,\\  0,~s\neq t, \end{array}
~s,t=1,\cdots,r.
\right.                
\end{array} 
\end{equation}
 These optimal projection axes are in fact the orthonormal eigenvectors of $G_w$ corresponding to the  $r$ largest eigenvalues.

Once the relaxed covariance matrix, $G_w$,  is built, we can compute its $r$ $(1\le r\le n)$ largest (positive) eigenvalues  and their corresponding  eigenvectors (called eigenfaces), denoted as $(\lambda_1, v_1^{opt}),\ldots, (\lambda_r, v_r^{opt})$.   
Let the optimal projection be  
\begin{equation}\label{e:V}V=[v_1^{opt},\cdots, v_r^{opt}]\end{equation}
  and the diagonal matrix be
  \begin{equation}\label{e:D}D={\rm diag}(\lambda_1,\ldots, \lambda_r). \end{equation}
Then $V^*V=I_r$ and $D>0$.  Following that, for any quaternion matrix norm $\|\cdot\|$, we define the $D$-norm $\|\cdot\|_D$ as follows
$$\|A\|_D=\|AD\|,~\text{\rm for any quaternion matrix,}~A\in\mathbb{H}^{m\times r}.$$

  \subsection{Color face recognition}\label{ss:eigenfaces}
In this section, we apply the optimal projection, $V$, and the positive definite matrix, $D$,   to color face recognition.  

The projections of $\ell$ training color face images in the subspace, $V$, are
\begin{equation}\label{e:ps4fs}P_s=(F_s-\Psi)V \in\mathbb{H}^{m\times r},\ s=1,\cdots,\ell,
\end{equation}
where $\Psi$ is defined by \eqref{e:psi}. The columns of  $P_s$,  $y_t=(F_s-\Psi)v_t$, $t=1,\ldots,r$,  are called the {\it principal component (vectors)},   and   $P_s$ is called the {\it feature matrix} or {\it feature image} of the sample image, $F_s$.    
Each {\it principal component} of the sample-relaxed 2DCPCA is a quaternion vector.
With the feature matrices in hand, we use a nearest neighbour classifier for color face recognition. 

Now, we present a sample-relaxed   two-dimensional color principal component analysis (SR-2DCPCA) approach.
\begin{center}
   \fbox{%
\parbox{0.9\textwidth}{
\begin{model}[\bf SR-2DCPCA]\label{m:sr2dpca} 

\begin{itemize} 
\item[$(1)$] 
Compute the  relaxed color image covariance matrix, $G_w$, of training samples   by 
\eqref{e:Gw}
\item[$(2)$] Compute the $r$ $(1\le r\le n)$ largest eigenvalues of $G_w$ and their corresponding  eigenvectors (called eigenfaces), denoted as $(\lambda_1, v_1),$ $\ldots,$ $(\lambda_r, v_r)$.  
 Let the eigenface subspace be $V={\rm span}\{v_1,$ $\ldots,$ $v_r\}$ and the weighted matrix be $D={\rm diag}(\lambda_1,\cdots,\lambda_r)$. 
\item[$(3)$]  Compute the projections of $\ell$ training color face images in the subspace, $V$,
\begin{equation*}P_s=(F_s-\Psi)V \in\mathbb{H}^{m\times r},\ s=1,\cdots,\ell.\end{equation*}
\item[$(4)$] For a testing sample, $F$,  compute its feature matrix, $P=(F_s-\Psi)V$. 
 Seek the nearest face image, $F_s$ $(1\le s\le \ell)$ whose feature matrix satisfies that
$\|P_s-P\|_D=\min$.  $F_s$ is output as the person to be recognized.
\end{itemize}
 \end{model}
 }
 }
 \end{center}

SR-2DCPCA can preserve color and spatial information of color face images as 2DCPCA (proposed in \cite{jlz17}).    
Compared to 2DCPCA,   SR-2DCPCA  has an additional  computation  amount on calculating  the relaxation vector, and generally  provides a better discriminant for classification.  The aim of the projection of SR-2DCPCA is to maximize the variance of the whole samples, while that of 2DCPCA is to maximize the variance of training samples.   Note that the eigenvalue problems of Hermitian quaternion matrices in Model \ref{m:sr2dpca} can be solved by the fast structure-preserving algorithm  proposed in  \cite{jwl13}.

Now we provide a toy example.
\begin{example}\label{ex:toy}  As shown in  Figure \ref{f:toyexam}, $400$ randomly generated points  are equally separated into two classes (denoted as $\times$ and $\circ$, respectively).  We choose 100 points from each class as training samples (denoted as magenta $\times$ and $\circ$ ) and the rest as testing samples (denoted as blue $\times$ and $\circ$).  The principle component of $200$ training points is computed by 2DCPCA and SR-2DCPCA.   In three random cases,  the relaxation vectors are  $[0.4771,\ 0.5228]$, $[0.6558,\ 0.3442]$  and $[0.5097,\ 0.4903]$;   the computed principle components are plotted with the blue lines. 
  The variances of the training set and the whole $400$ points, under the projection of 2DCPCA and SR-2DCPCA,  are shown in the following table.
  \begin{table*}[htbp]
  \center
\begin{tabular}{|c|cc|cc|cc|}
  \hline
Case &  \multicolumn{2}{c|}{  Variance of training points} &  \multicolumn{2}{c|}{  Variance of the whole points}       \\ 
           &  {  2DCPCA}     &  { SR-2DCPCA} &  {  2DCPCA}     &  { SR-2DCPCA}       \\ \hline
$1$  &      ${\bf 4.5638} $& $ 4.5562$         &      $4.2452$& ${\bf 4.2617}$            \\   
$2$  &      ${\bf 3.6007}$&$3.5803$     &      $4.1267$&${\bf 4.1459}$                    \\ 
$3$  &      ${\bf 3.6381}$&$3.5582$   &      $3.7255$&${\bf 3.7452}$                      \\ \hline 
\end{tabular}
\end{table*}
 \begin{figure}[!t]
  \centering
      \includegraphics[height=0.320\textwidth,width=0.45\textwidth]{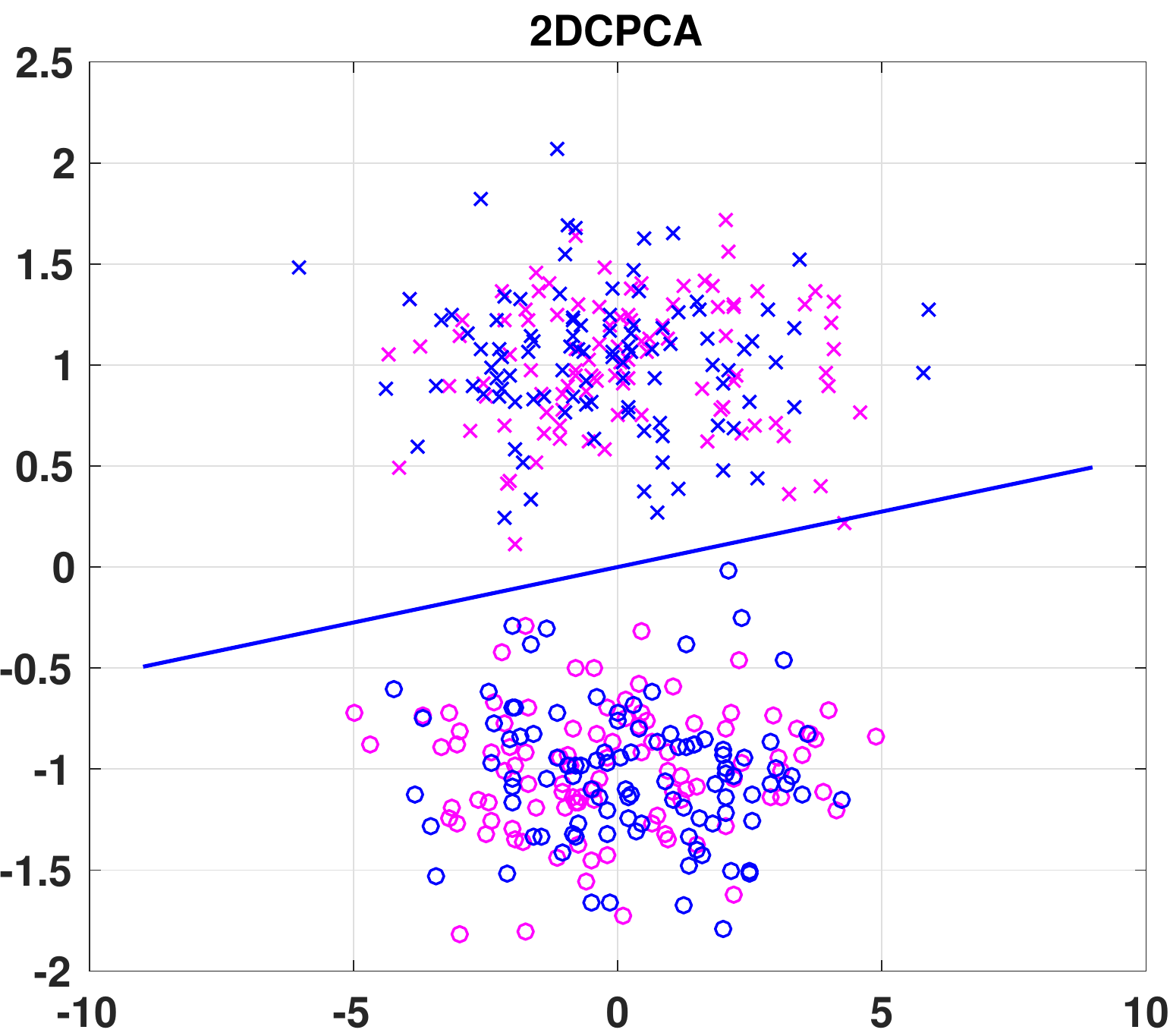}
    \includegraphics[height=0.320\textwidth,width=0.45\textwidth]{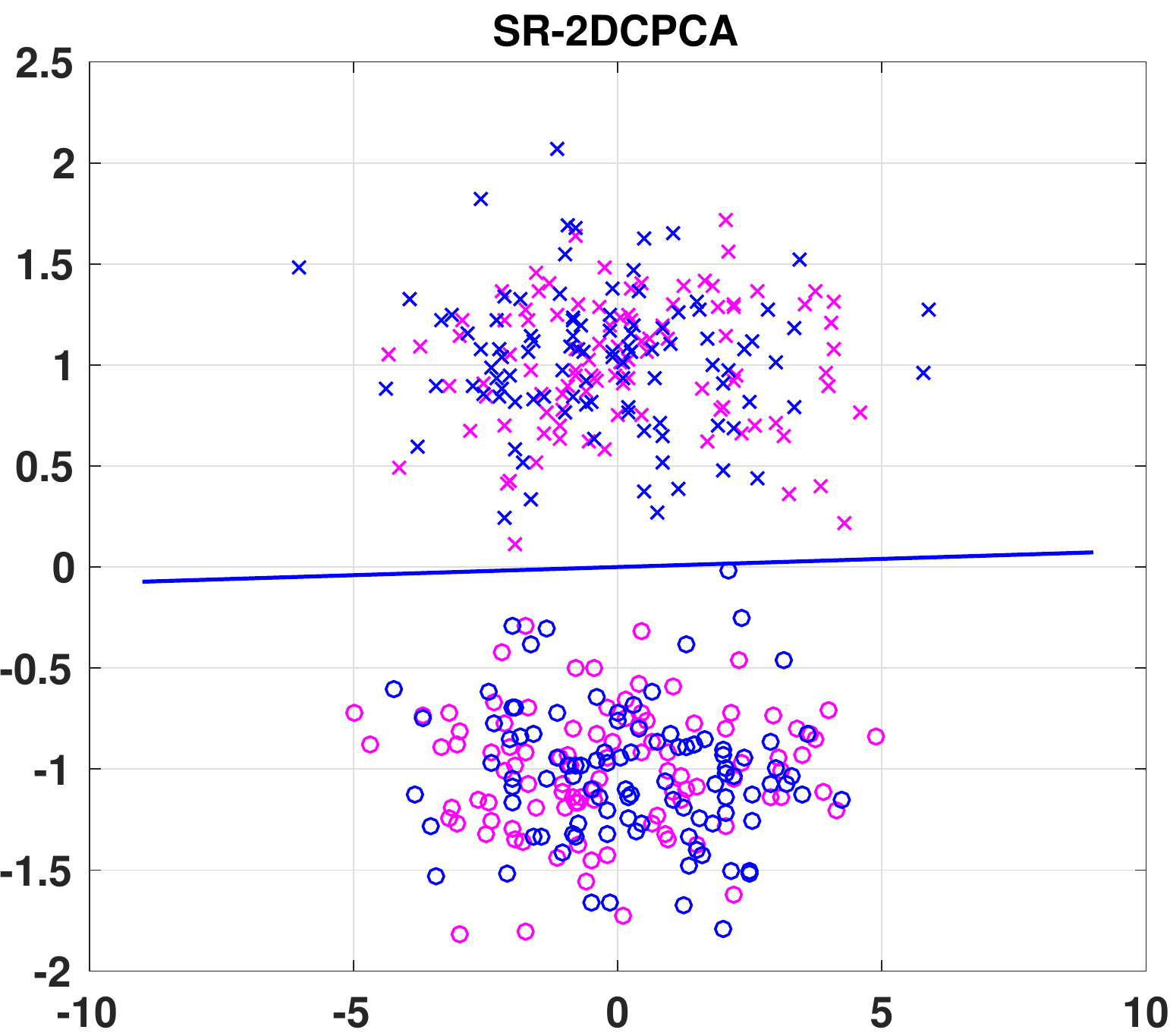}
    \includegraphics[height=0.320\textwidth,width=0.45\textwidth]{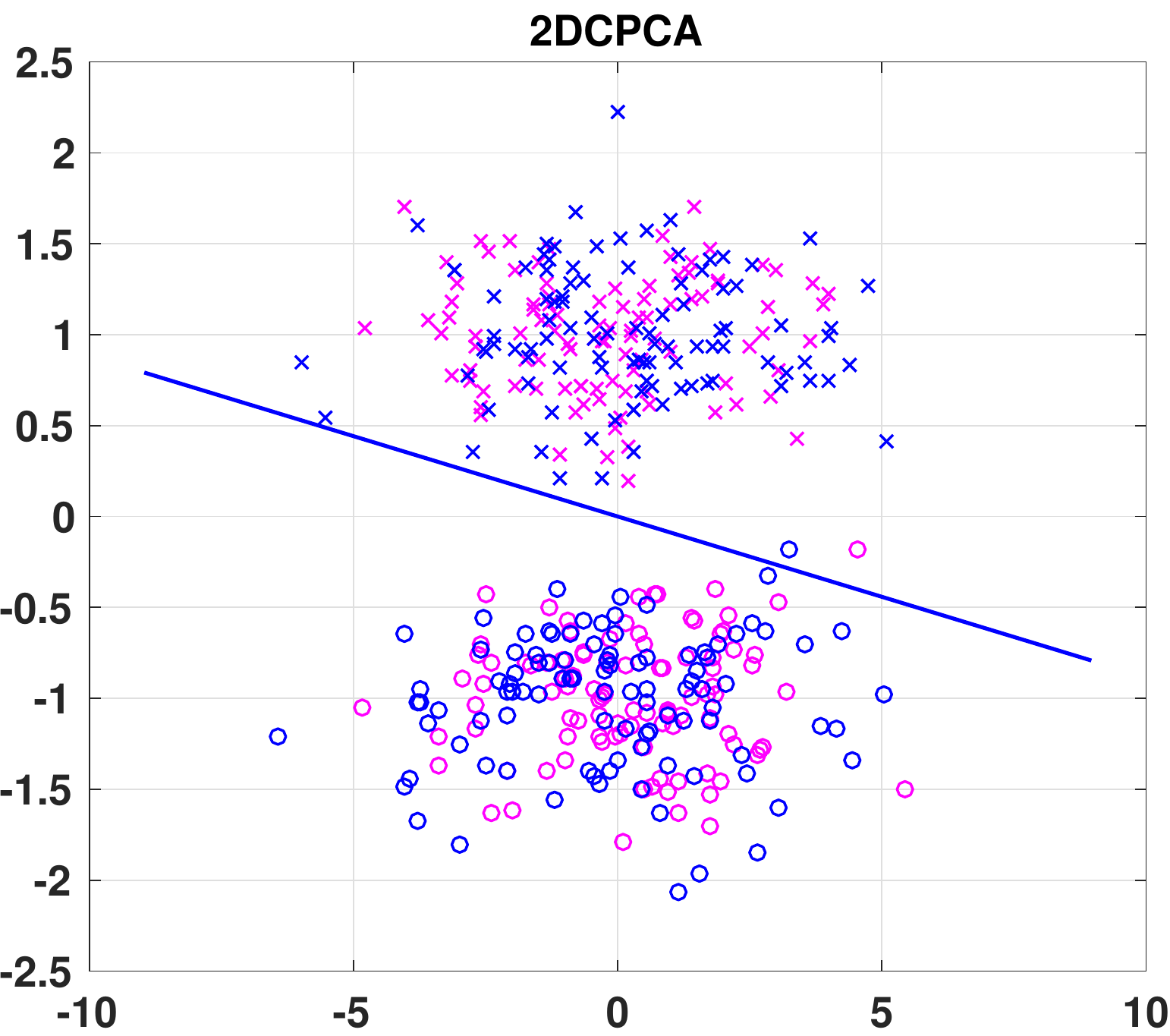}
    \includegraphics[height=0.320\textwidth,width=0.45\textwidth]{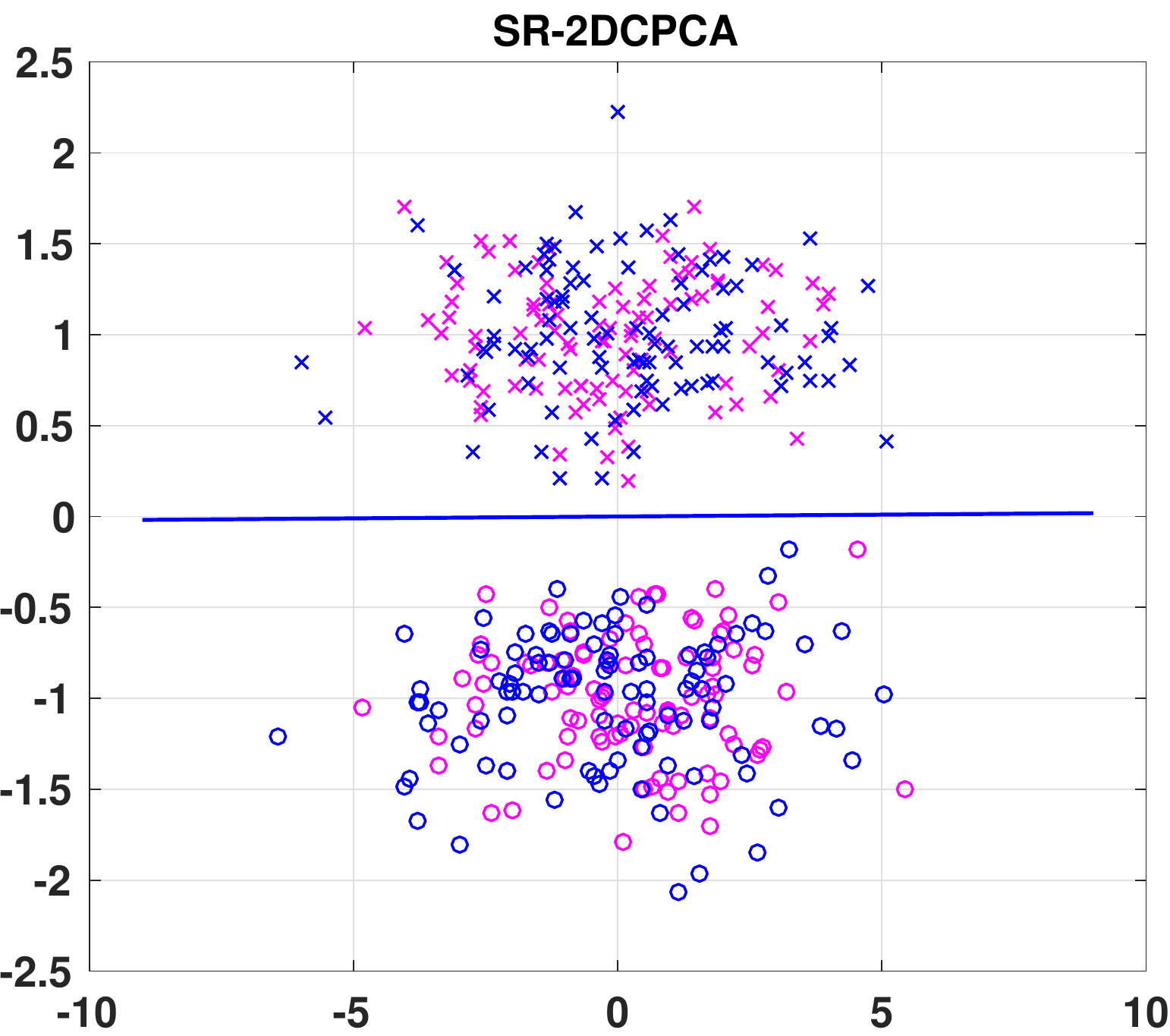}
    \includegraphics[height=0.320\textwidth,width=0.45\textwidth]{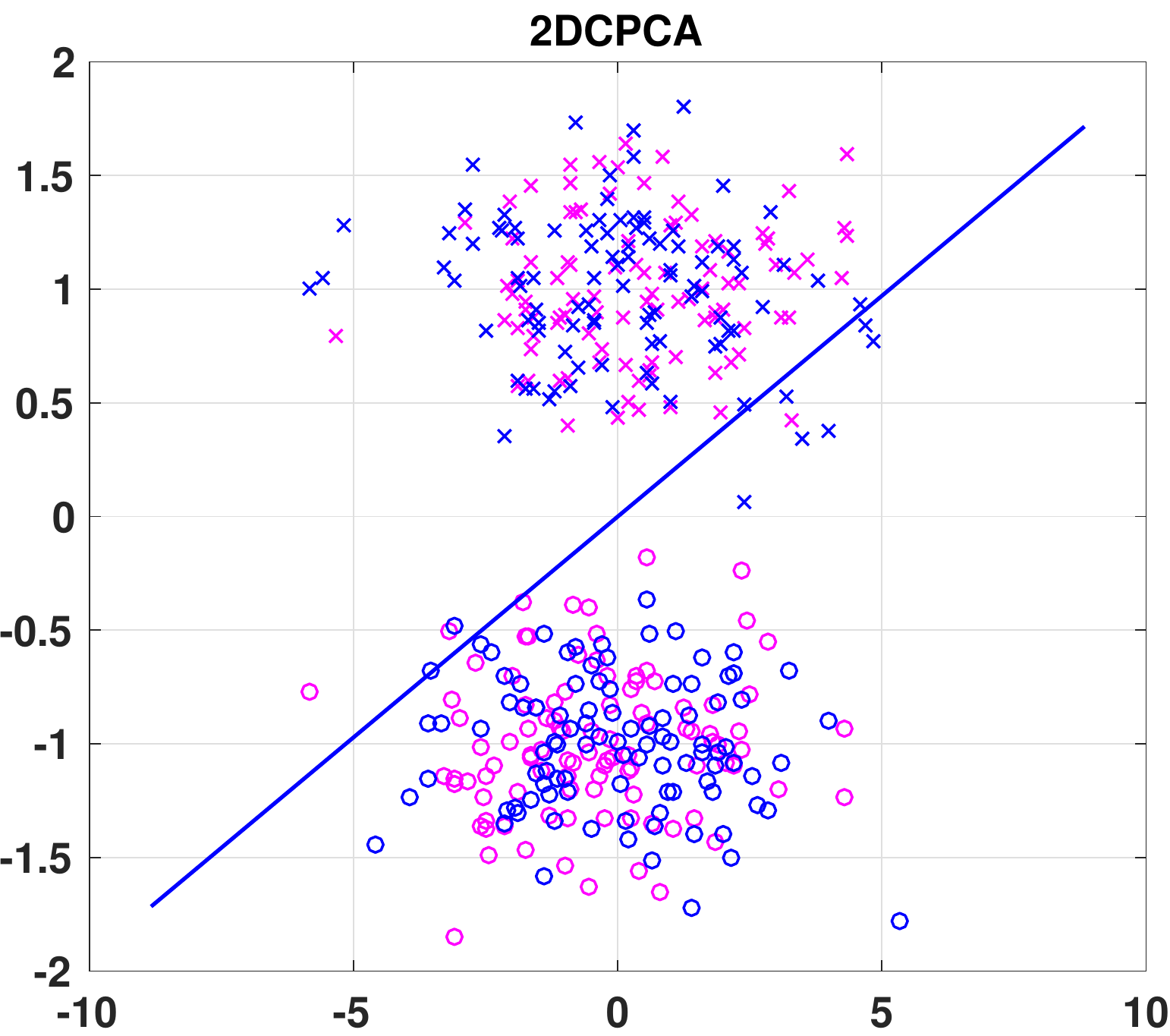}
    \includegraphics[height=0.320\textwidth,width=0.45\textwidth]{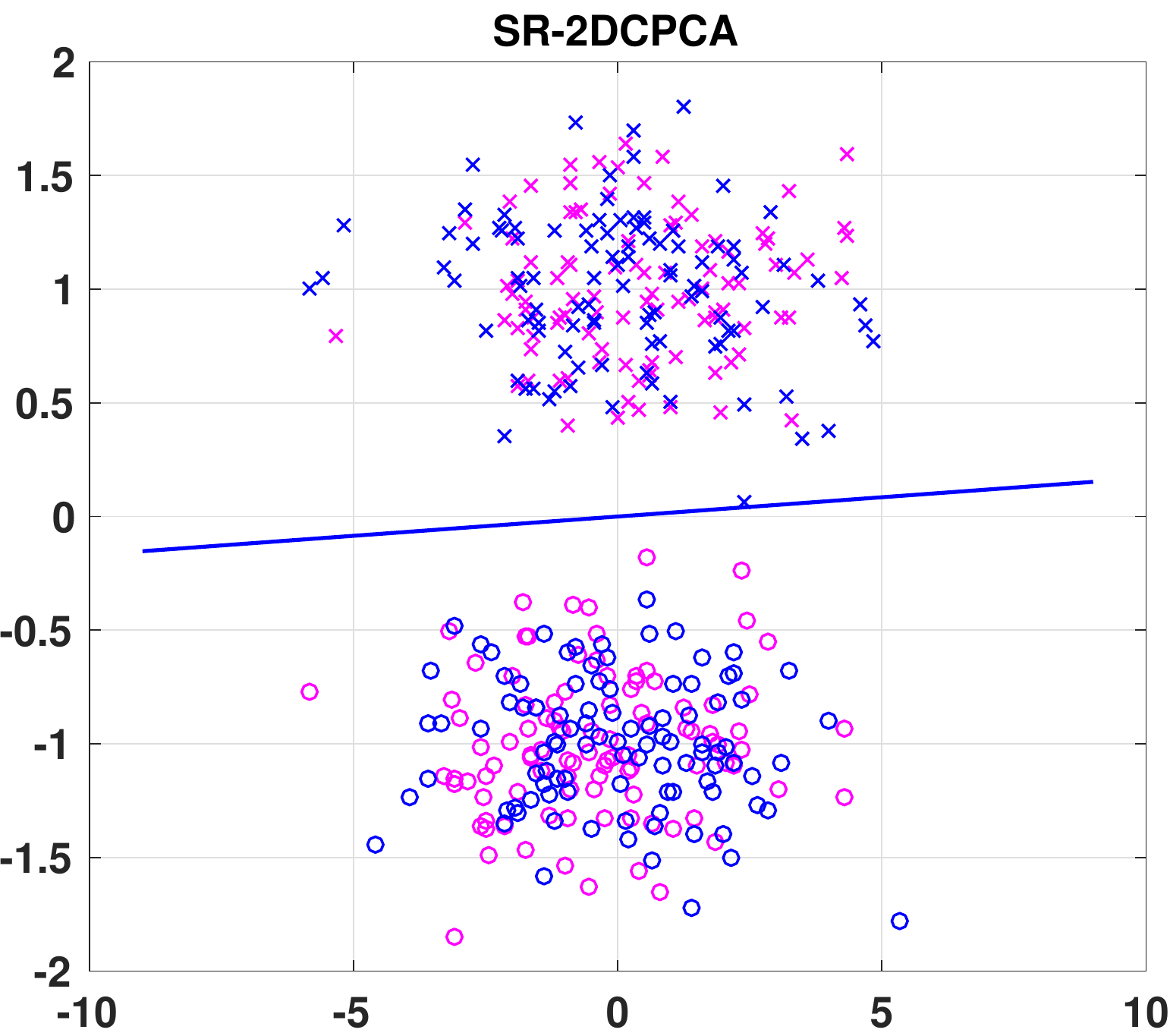}
\caption{Points in three random cases and the principle components (blue lines).}
\label{f:toyexam}
\end{figure}
\end{example}

\subsection{Color image  reconstruction}\label{ss:imreconstruction}

Now, we consider color image reconstruction  based on  SR-2DCPCA.
  Without loss of generality, we suppose that $\Psi=0$. In this case,  the projected color image is  $P_s=F_sV$. 
  \begin{theorem}\label{t:reconstruction}
 Suppose that $V^\bot \in\mathbb{H}^{n\times (n-r)}$ is the unitary complement of~$V$ such that $[V,V^\bot]$ is a unitary quaternion matrix. Then the reconstruction of a training sample, $F_s$, is $P_sV^*$, and  
   \begin{equation}\label{e:ps_ps2}\| P_sV^*- F_s\|=\|F_sV^\bot\|,\ s=1,\cdots,\ell.\end{equation}
 \end{theorem}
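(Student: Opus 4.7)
The plan is to expand $P_s V^{*} - F_s$ explicitly using $P_s = F_s V$ (valid because $\Psi=0$), and then exploit the unitarity of the block matrix $[V, V^\bot]$ to rewrite the residual in terms of $V^\bot$.

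First I would substitute $P_s = F_s V$ so that
\begin{equation*}
P_s V^{*} - F_s \;=\; F_s V V^{*} - F_s \;=\; -F_s\bigl(I_n - V V^{*}\bigr).
\end{equation*}
Next, since $[V, V^\bot]$ is unitary, the identity $V V^{*} + V^\bot (V^\bot)^{*} = I_n$ holds, which gives $I_n - VV^{*} = V^\bot (V^\bot)^{*}$. Therefore
\begin{equation*}
P_s V^{*} - F_s \;=\; -F_s V^\bot (V^\bot)^{*}.
\end{equation*}

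Finally I would take the norm of both sides and use the fact that the relevant quaternion matrix norm is unitarily invariant (the Frobenius norm in this setting) together with the fact that $(V^\bot)^{*}$ has orthonormal rows. This yields
\begin{equation*}
\bigl\|P_s V^{*} - F_s\bigr\| \;=\; \bigl\|F_s V^\bot (V^\bot)^{*}\bigr\| \;=\; \bigl\|F_s V^\bot\bigr\|,
\end{equation*}
which is precisely \eqref{e:ps_ps2}. The interpretation that $P_s V^{*}$ is the reconstruction of $F_s$ then follows because this residual is the projection of $F_s$ onto the orthogonal complement of the eigenface subspace, so it is the unavoidable error from discarding the last $n-r$ directions.

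The only real subtlety is justifying the last norm equality over the quaternion algebra: one needs to invoke unitary invariance of $\|\cdot\|$ in the non-commutative setting, which for the Frobenius norm follows from $\mathrm{tr}(A^{*}A)$ being preserved under right-multiplication by a matrix with orthonormal columns. I expect no other obstacles; the rest is algebraic manipulation using the unitarity relation.
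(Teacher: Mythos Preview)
Your proof is correct and follows essentially the same route as the paper: substitute $P_s=F_sV$, use $VV^{*}+V^\bot(V^\bot)^{*}=I_n$ to rewrite the residual as $F_sV^\bot(V^\bot)^{*}$ (up to sign), and then drop the isometric factor $(V^\bot)^{*}$ by unitary invariance of the norm. The paper's argument is identical, though it does not pause to comment on unitary invariance over~$\mathbb{H}$ as you do.
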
   
\begin{proof}We only need to prove \eqref{e:ps_ps2}. Since quaternion matrix $[V,V^\bot]$ is unitary,  
$$ VV^*+V^\bot(V^\bot)^*=I_n,
\ V^*V=I_r,
\ (V^\bot)^*V^\bot=I_{n-r},
\ V^*V^\bot=0,$$
where $I_m$ is an $m\times m$ identity matrix.
The projected image, $P_s$,  of the sample image, $F_s$, satisfies that
$$P_sV^*=F_sVV^*=F_s(I_n-V^\bot (V^\bot)^*).$$
Therefore, we have 
$$\| P_sV^*- F_s\|=\|F_sV^\bot(V^\bot)^*\|=\|F_sV^\bot\|.$$
\end{proof}
 \noindent
According to \cite{jlz17}, the image reconstruction rate of $P_sV^*$  can be defined as follows
\begin{equation}\label{e:ratio}Ratio_s=1-\frac{\|P_sV^*-F_s\|_2}{\|F\|_2}.\end{equation}
Since $V$ is generated based on eigenvectors corresponding to the $r$ largest eigenvalues of $G_w$, 
$P_sV^*$ is always a good approximation of the color image, $F_s$.
If the number of chosen principal components $r=n$,  $V$ is a unitary matrix and $Ratio_s=1$, which means $F_s=P_sV^*$.

From the above analysis, SR-2DCPCA is convenient to reconstruct color face images from the projections, as 2DCPCA proposed in \cite{jlz17}.

\section{Variance updating}\label{s:varianceupdating}
In this section we analysis the contribution of training samples from each class to the variance of the projected samples.

For the $b$-th class with $\ell_b$ training samples, let
\begin{eqnarray*}
X&=&\left[(F_1^b-\Psi)^*\ |\ (F_2^b-\Psi)^*\ |\ \cdots\ |\ (F_{\ell_b}^b-\Psi)^*\right],\\
A&=&\frac{1}{\ell}\sum_{a\neq b}\left(\frac{w_a}{\ell_a}\sum_{s=1}^{\ell_a}(F_s^a-\Psi)^*(F_s^a-\Psi) \right).
\end{eqnarray*}
According to the definition of  the relaxed covariance matrix  \eqref{e:Gw},  we can rewrite $G_w$ as
\begin{equation}\label{e:GtD}G_w=A+\frac{w_b}{\ell}(\frac{1}{\ell_b}XX^*).\end{equation}

 Denote all the eigenvalues of an  $n$-by-$n$ Hermitian quaternion matrix, $A$, as $\lambda_1(A), \lambda_2(A),\ldots, \lambda_n(A)$ in descending order. 
\begin{lemma}\label{c:BAX}
  Suppose that $B=A+\rho XX^*$, where $A\in\mathbb{H}^{n\times n}$ is Hermitian, $X\in\mathbb{H}^{n\times m}$  and $\rho\ge 0$.
Then 
$$|\lambda_a(B)-\lambda_a(A)|\le \rho\|X\|_2^2,\ a=1,\cdots,n.$$
 \end{lemma}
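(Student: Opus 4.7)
The plan is to recognize this as Weyl's eigenvalue perturbation inequality specialized to a positive semidefinite rank-$m$ update. Setting $E := \rho XX^*$, I would first observe that $E$ is Hermitian because $\rho\in\R_{\ge 0}$ and $(XX^*)^* = XX^*$, and positive semidefinite because $v^*Ev = \rho\|X^*v\|_2^2\ge 0$ for every $v\in\H^n$. Consequently all eigenvalues of $E$ lie in $[0,\lambda_1(E)]$, and $\lambda_1(E) = \|E\|_2 = \rho\|XX^*\|_2 = \rho\|X\|_2^2$, where the last equality is the standard identity coming from the singular value decomposition of $X$.

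The Weyl-type bound I would invoke asserts that for Hermitian $A$ and $E$ with $B = A + E$,
\begin{equation*}
\lambda_a(A) + \lambda_n(E) \;\le\; \lambda_a(B) \;\le\; \lambda_a(A) + \lambda_1(E),\qquad a=1,\ldots,n.
\end{equation*}
Because $E \succeq 0$ we have $\lambda_n(E)\ge 0$ and $\lambda_1(E)=\rho\|X\|_2^2$, so the two-sided inequality collapses to $0 \le \lambda_a(B) - \lambda_a(A) \le \rho\|X\|_2^2$, which is exactly the claim.

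The main obstacle is that the ambient field is $\H$ rather than $\R$ or $\C$, so the classical Weyl inequality cannot be cited as a black box. I would handle this in one of two ways. The cleaner route is to pass to the complex adjoint representation $\chi$ that sends an $n\times n$ Hermitian quaternion matrix $M$ to a $2n\times 2n$ Hermitian complex matrix $\chi(M)$ whose spectrum is exactly that of $M$ with each eigenvalue doubled; applying the classical complex Weyl inequality to $\chi(A)$ and $\chi(E) = \rho\,\chi(X)\chi(X)^*$ and noting $\|\chi(X)\|_2 = \|X\|_2$ transfers the bound verbatim. Alternatively, one can redo Weyl's proof natively via the Courant--Fischer min--max characterization $\lambda_a(M) = \max_{\dim S = a}\min_{0\ne v\in S}\frac{v^*Mv}{v^*v}$, which remains valid for Hermitian quaternion matrices because they admit a real spectrum and an orthonormal eigenbasis in $\H^n$ and because the Rayleigh quotient is real whenever $M = M^*$; the standard subspace-intersection argument then goes through without change.
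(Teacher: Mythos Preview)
Your proposal is correct and follows essentially the same route as the paper: both invoke the Weyl-type two-sided bound $\lambda_a(A)+\lambda_n(E)\le\lambda_a(A+E)\le\lambda_a(A)+\lambda_1(E)$ with $E=\rho XX^*$ and then use $\|XX^*\|_2=\|X\|_2^2$. The paper simply asserts (citing Stewart) that the complex result carries over to Hermitian quaternion matrices, whereas you go further and sketch two ways to justify this---via the complex adjoint representation $\chi$ or a native Courant--Fischer argument---which makes your write-up more self-contained but not different in substance.
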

\begin{proof} 
The  properties for the eigenvalues of Hermitian (complex) matrices  \cite[Theorem 3.8, page 42]{stew01}  are still right for  Hermitian quaternion matrices. Suppose that $A, \Delta A\in\mathbb{H}^{n\times n}$ are Hermitian matrices, then   
  \begin{equation}\label{e:maxmin}\lambda_a(A)+\lambda_n(\Delta A)\le \lambda_a(A+\Delta A)\le\lambda_a(A)+\lambda_1(\Delta A)
  \end{equation}
for $a=1,\ldots,n$. 
Consequently,
  \begin{equation}\label{e:maxminbound}|\lambda_a(A+\Delta A)-\lambda_a(A)|\le \|\Delta A\|_2,\ a=1,\cdots,n.\end{equation}
Since $XX^*$ is Hermitian and semi-definite positive, its maximal singular value, $\|XX^*\|_2$,  is exactly its maximal eigenvalue,  which is the multiplication of the maximal singular values of $X$ and $X^*$. That is $\|XX^*\|_2=\|X\|_2^2$.
From equation \eqref{e:maxminbound}, we can obtain that 
 $$|\lambda_a(B)-\lambda_a(A)|\le\|\rho XX^*\|_2=\rho\|X\|_2^2,\ a=1,\cdots,n.$$
\end{proof}
 
Since  Hermitian quaternion matrices, $G_w$, $A$ and $XX^*$ are semi-definite positive,  
we obtain that 
$$\lambda_a(A)\le \lambda_a(G_w)\le \lambda_a(A)+ \lambda_1(\frac{w_b}{\ell\ell_b}XX^*).$$
Let the variance of projections of all the training samples on $V$ be 
$$\varepsilon=\sum_{s=1}^r\lambda_s(G_w).$$
The following theorem characterizes  the contribution of training samples from one class  to the variance of all the projected training samples. 
\begin{theorem} With the above notations and a fixed  relaxation vector $W$, the variance of projections of all the training samples  $\varepsilon$ satisfies that
$$\sum_{s=1}^r\lambda_s(A)\le \varepsilon \le \sum_{s=1}^r\lambda_s(A)+\frac{rw_b}{\ell}\|X\|_2^2.$$
\end{theorem}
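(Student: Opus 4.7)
The plan is to read the theorem as a direct consequence of the perturbation lemma just proved, applied to the rank-bounded update exhibited in equation \eqref{e:GtD}. Concretely, \eqref{e:GtD} writes $G_w = A + \rho XX^*$ with $\rho = \frac{w_b}{\ell\ell_b} \ge 0$, where $A$ is Hermitian and $XX^*$ is Hermitian positive semidefinite. So both bounds on $\varepsilon = \sum_{s=1}^r \lambda_s(G_w)$ will come termwise from comparing $\lambda_a(G_w)$ to $\lambda_a(A)$ for $a=1,\dots,r$, and then summing.

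For the lower bound, I would invoke the left half of Weyl's inequality \eqref{e:maxmin} from the proof of Lemma \ref{c:BAX}: with $\Delta A = \rho XX^* \succeq 0$, one has $\lambda_n(\Delta A) \ge 0$, so $\lambda_a(G_w) \ge \lambda_a(A)$ for every $a$. Summing over $a=1,\dots,r$ gives $\varepsilon \ge \sum_{s=1}^r \lambda_s(A)$. For the upper bound, I would apply Lemma \ref{c:BAX} directly with $B = G_w$ and $\rho = w_b/(\ell\ell_b)$, which yields
\begin{equation*}
\lambda_a(G_w) - \lambda_a(A) \;\le\; \frac{w_b}{\ell\ell_b}\,\|X\|_2^2, \qquad a=1,\dots,n.
\end{equation*}
Summing this inequality for $a=1,\dots,r$ and then using the trivial estimate $1/\ell_b \le 1$ (valid because $\ell_b \ge 1$) produces the stated right-hand side $\sum_{s=1}^r \lambda_s(A) + \frac{rw_b}{\ell}\|X\|_2^2$.

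There is no real obstacle: all the technical content sits in Lemma \ref{c:BAX} and in the fact that $XX^*$ is positive semidefinite. The only mild point to flag is that the bound actually proved in this way, $\frac{rw_b}{\ell\ell_b}\|X\|_2^2$, is slightly sharper than what the theorem states; the stated form is obtained by the (loose but clean) step $\ell_b \ge 1$. If one wanted the sharper version, the proof would be identical with $\ell$ replaced by $\ell\ell_b$ in the final display.
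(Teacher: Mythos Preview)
Your proposal is correct and follows essentially the same approach as the paper: both use the decomposition \eqref{e:GtD} together with Weyl's inequality (for the lower bound, via positive semidefiniteness of $XX^*$) and Lemma~\ref{c:BAX} (for the upper bound), then sum over $s=1,\dots,r$. You are slightly more careful in one respect: you make explicit the step $1/\ell_b \le 1$ that bridges the Lemma~\ref{c:BAX} bound $\frac{w_b}{\ell\ell_b}\|X\|_2^2$ to the stated $\frac{w_b}{\ell}\|X\|_2^2$, whereas the paper writes the latter directly.
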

\begin{proof} From equation \eqref{e:GtD}, we obtain that
$$G_w=A+\rho XX^*,~\rho=\frac{w_b}{\ell\ell_b}>0.$$
Since Hermitian quaternion matrices, $G_w$ and $A$, are semi-definite positive,  we can see that
$$\lambda_s(G_w)\ge \lambda_s(A),$$
and Lemma \ref{c:BAX} implies that
$$\lambda_s(G_w)\le \lambda_s(A)+\frac{w_b}{\ell}\|X\|_2^2.$$
Thus $\sum_{s=1}^r\lambda_s(G_w) \le \sum_{s=1}^r\lambda_s(A)+\frac{rw_b}{\ell}\|X\|_2^2.$
\end{proof}

\section{Experiments}\label{s:experiments}
\noindent
In this section we compare the efficiencies of five  approaches on color face recognition:   
\begin{itemize}
\item  PCA:  the Principle Component Analysis (with converting Color-image into Grayscale),
\item  CPCA:  the Color Principle Component Analysis proposed in {\rm \cite{xyc15}},
\item  2DPCA:   the Two-Dimensional Principle Component Analysis  proposed in  {\rm \cite{yzfy04}} (with converting Color-image into Grayscale), 
\item  2DCPCA: the  Two-Dimensional Color Principle Component Analysis proposed in {\rm \cite{jlz17}},
\item  SR-2DCPCA: the Sample-Relaxed Two-Dimensional Color Principle Component Analysis proposed in Section $\ref{s:SR-2DCPCA}$.
\end{itemize} 
All the numerical experiments are performed with MATLAB-R2016 on a personal computer with   
  Intel(R) Xeon(R) CPU E5-2630 v3 @  2.4GHz (dual processor) and RAM 32GB.

\begin{example}\label{ex:GTFD} In this experiment, we compare SR-2DCPCA  with PCA, CPCA,  2DPCA and 2DCPCA  using  the famous Georgia Tech face database \cite{GTFD}. The database contains various pose faces with various expressions on cluttered backgrounds.  All the images are manually  cropped, and then resized to $33 \times 44$ pixels.  Some  cropped images are shown in Fig. \ref{f:GTFD}.
 \begin{figure}[!t]
           \centering
           \includegraphics[width=0.6\textwidth,height=0.6\textwidth]{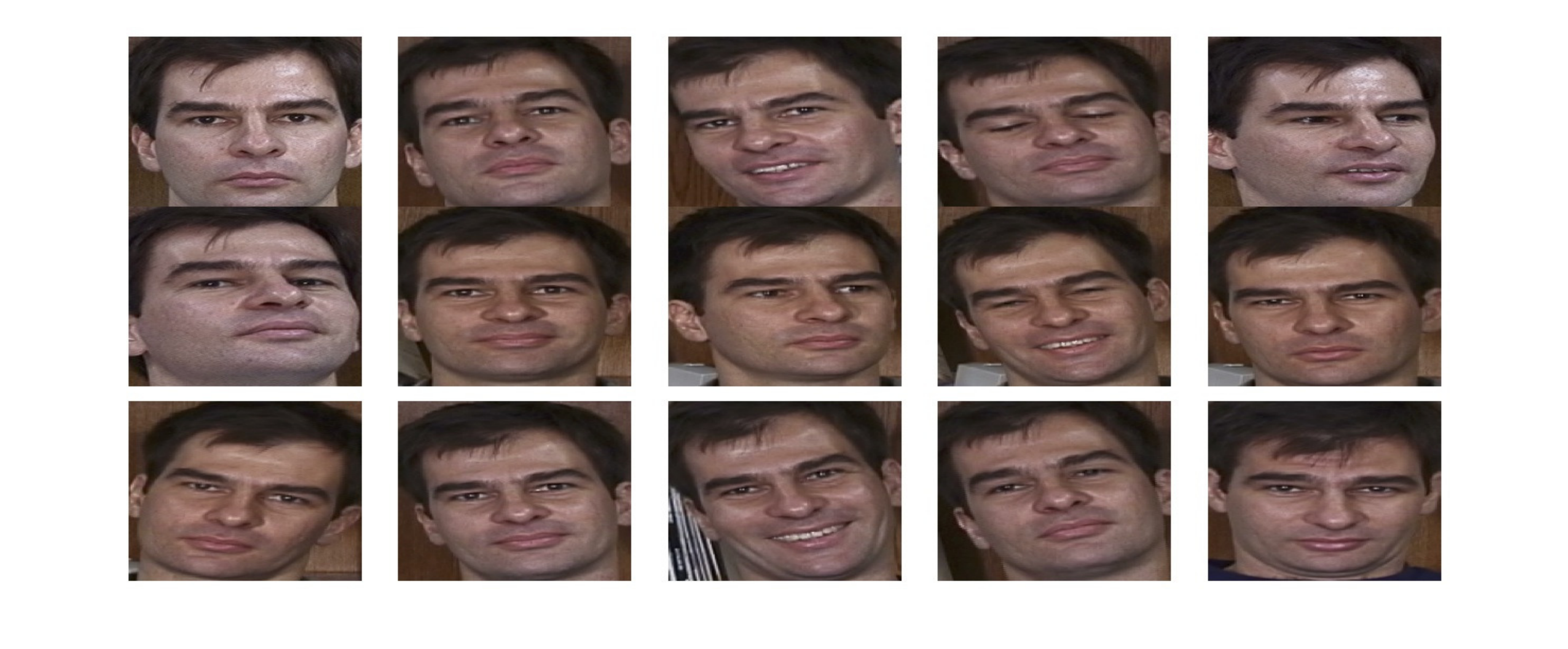} 
           \caption{Sample images for one individual of the Georgia Tech face database. }
           \label{f:GTFD}
\end{figure}
The first $x$ ($=10$ or $13$) images of each individual  are chosen as the training set and the remaining as the testing set. The numbers of the chosen eigenfaces are from $1$ to $20$.  The face recognition rates  of five PCA-like methods are shown in Fig. $\ref{f:RegRate_GTFD_SR-2DCPCA}$.
The top and the below figures show the results for cases that the first $x=10$ and $13$ face images per individual are chosen for training, respectively.

The maximal recognition rate (MR)  and  the CPU time for recognizing one face image in average  are listed in Table $\ref{t:maxratCPUtim}$.  We can see that SR-2DCPCA reaches the highest face recognition rate,  and costs less CPU time than CPCA.

 \begin{figure}[!t]
  \centering
   \includegraphics[height=0.50\textwidth,width=0.8\textwidth]{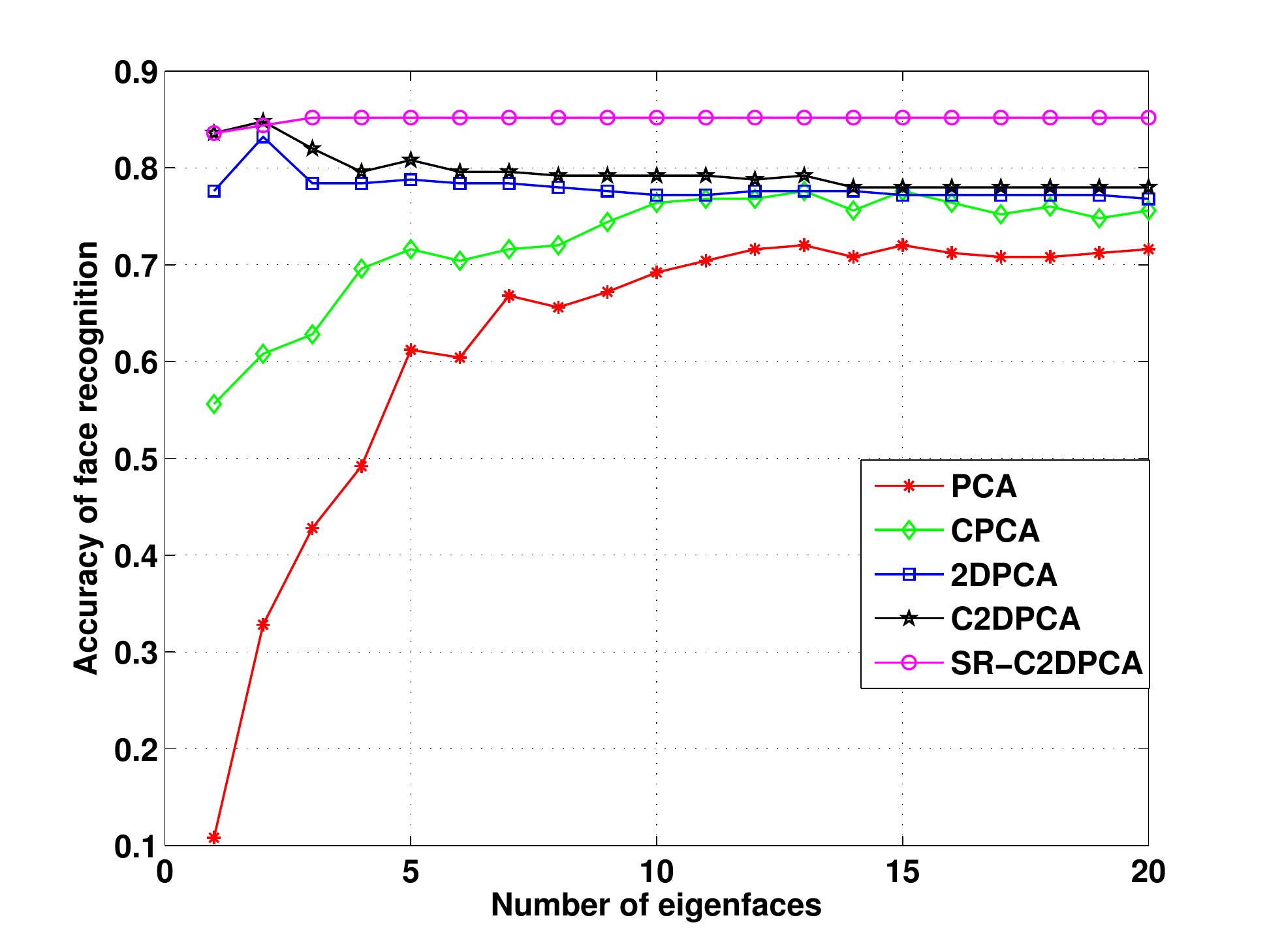}
  \includegraphics[height=0.50\textwidth,width=0.8\textwidth]{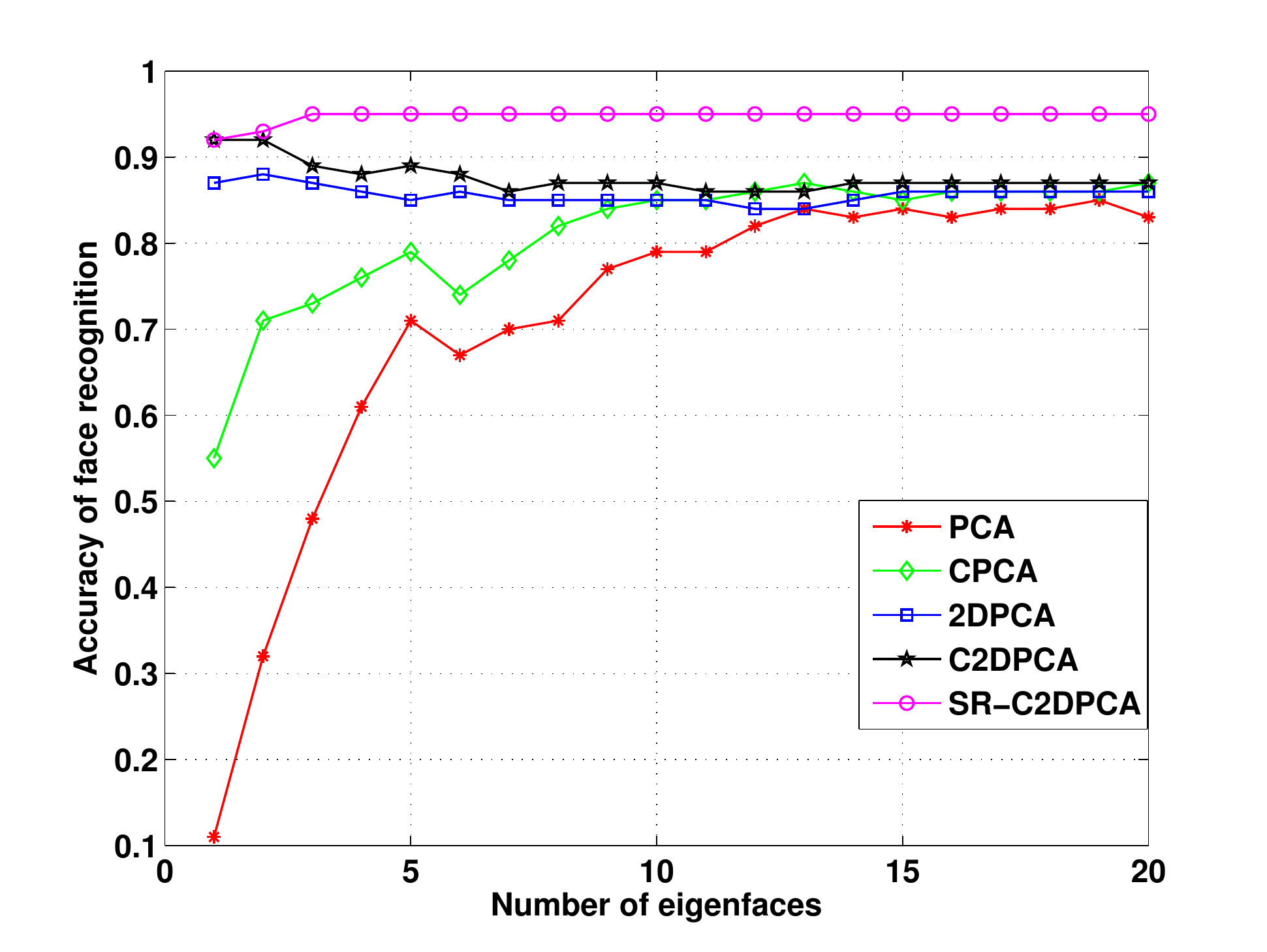}
\caption{Face recognition  rate of PCA-like methods for the Georgia Tech face database.}
\label{f:RegRate_GTFD_SR-2DCPCA}
\end{figure}
\begin{table*}[htbp]
\caption{Maximal face recognition rate  and average CPU time (milliseconds)}\label{t:maxratCPUtim}
\centering
\begin{tabular}{|c|cc|cc|cc|cc|}
  \hline
$x$ &   \multicolumn{2}{c|}{  CPCA}     &  \multicolumn{2}{c|}{  2DPCA}     &   \multicolumn{2}{c|}{ 2DCPCA}  &       \multicolumn{2}{c|}{SR-2DCPCA }    \\ 
&      MR&CPU              &MR&CPU              &MR&CPU                &MR&CPU\\  \hline
$10$  &      $77.6\%$&7.58           &$83.2\%$&0.02&        $84.8\%$&0.09    &      $85.2\%$&0.85\\ 
$13$  &      $87.0\%$&23.80             &$88.0\%$&0.06&       $92.0\%$&0.30    &       $95.0\%$&2.16\\
 \hline
\end{tabular}
\end{table*}
\end{example}

\begin{example}\label{ex:colorferet} In this experiment, we compare three two-dimensional approaches:
  2DPCA, 2DCPCA  and SR-2DCPCA on face recognition and image reconstruction,   by using  the color Face Recognition Technology (FERET)  database \cite{FERET}. 
The database  (version 2, DVD2, thumbnails) contains 269 persons, 3528 color face images,  and each person has various numbers of face images with various backgrounds.  The minimal number of face images for one person is 6, and the maximal one is 44. The size of each cropped color face image  is $192\times 128$ pixels, and
 some samples  are shown
in Fig. \ref{f:colorferet}.
 \begin{figure}[!t]
  \centering
  \includegraphics[width=1.0\textwidth,height=0.8\textwidth]{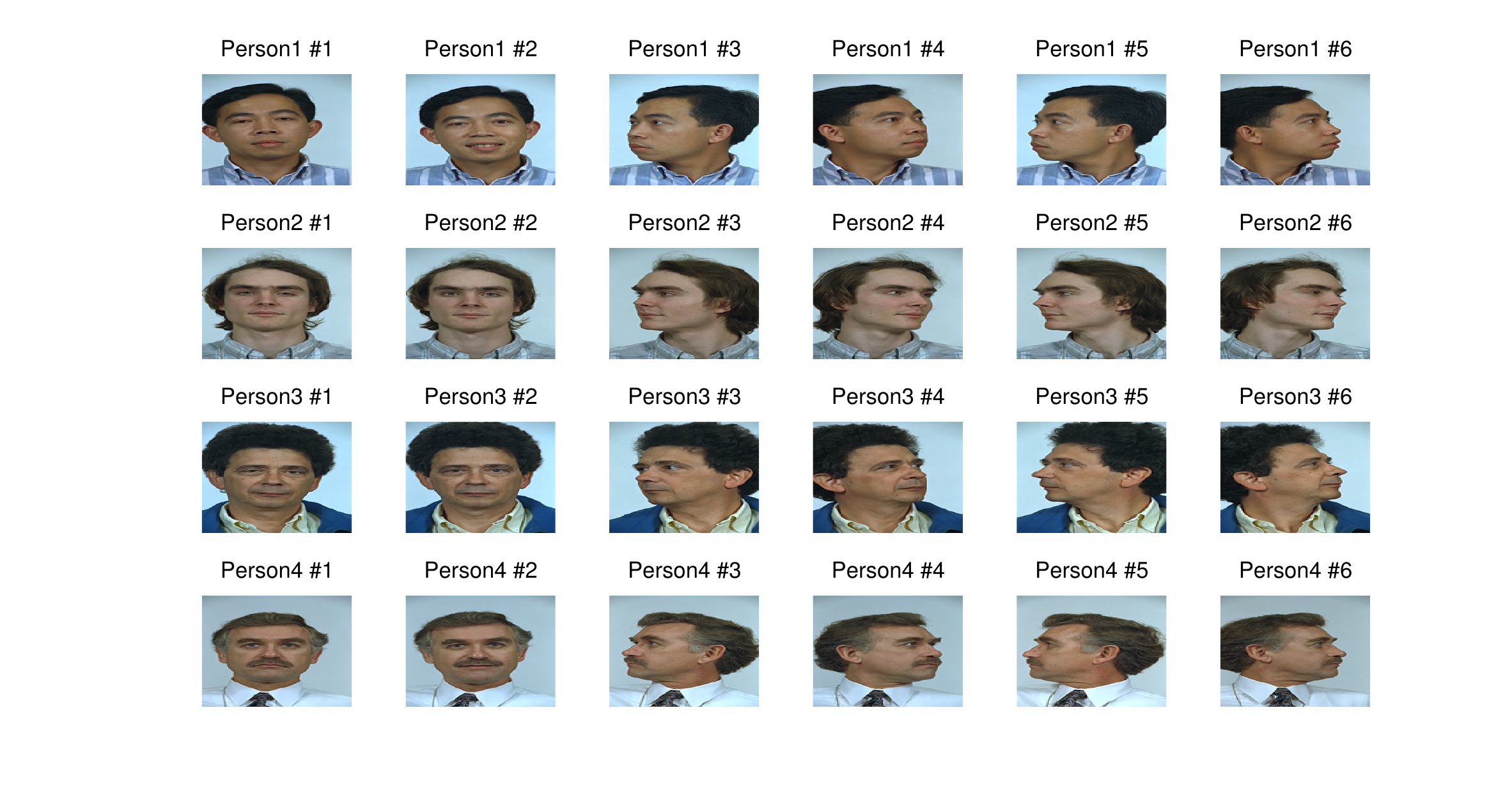} 
\caption{Sample images  of the color FERET face database. }
\label{f:colorferet}
\end{figure}
Taking the first 6 images of each individual as an example,  we randomly choose  $x (=1:5)$  image/images as the training set and the remaining  as the testing set.   This process is repeated 5 times, and the average value is output. For a fixed $x$, we consider ten cases that the number of the chosen eigenfaces increases from $1$ to $10$. The average face recognition rate of these ten cases is shown in the top of Fig. $\ref{f:averageRegRate_FERET_SR-2DCPCA}$. When $x=5$, the face recognition rate with the number of eigenfaces increasing from 1 to 10 is shown in the below of Fig. $\ref{f:averageRegRate_FERET_SR-2DCPCA}$.
 \begin{figure}[!t]
  \centering
    \includegraphics[height=0.5\textwidth,width=0.8\textwidth]{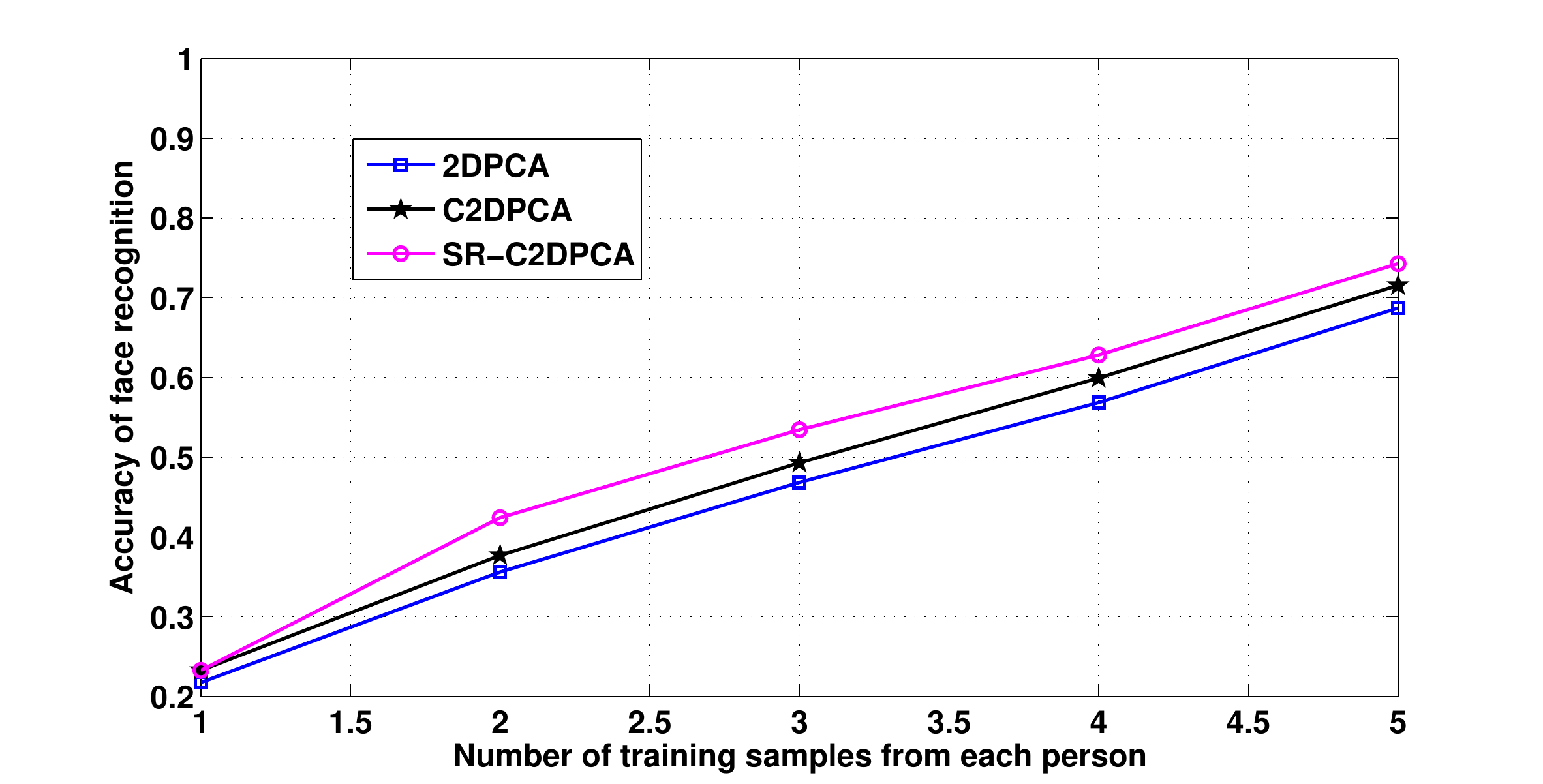}
  \includegraphics[height=0.5\textwidth,width=0.8\textwidth]{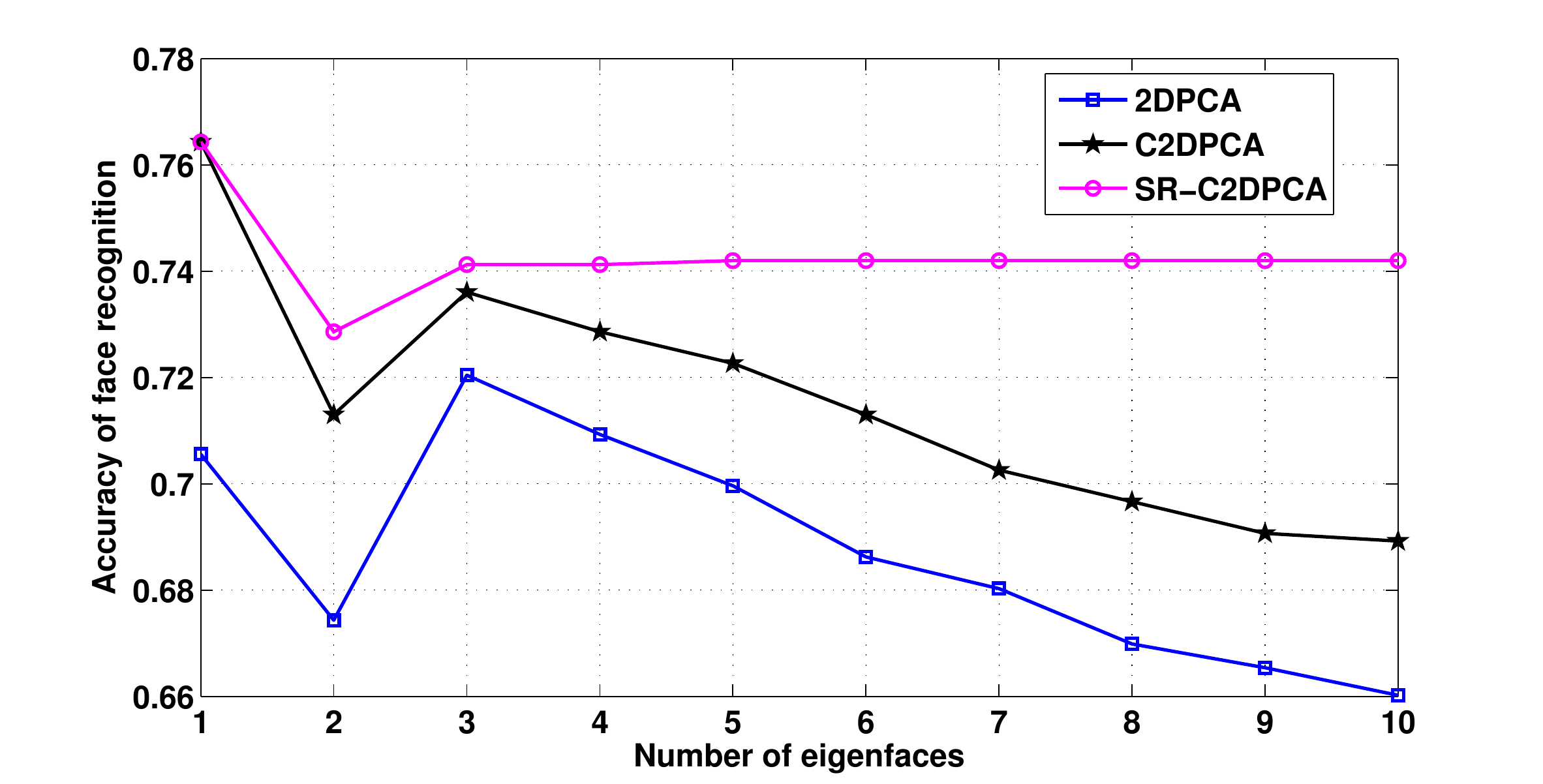}
\caption{Face recognition rate of PCA-like methods  for the color FERET face database.}
\label{f:averageRegRate_FERET_SR-2DCPCA}
\end{figure}

With $x=5$,  we also employ 2DCPCA and SR-2DCPCA to reconstruct the color face images ($F_s$) in training set from  their projections ($P_s$).    
In Fig. $\ref{f:Recon_SR2DCPCA}$,  the reconstruction of the first four persons with 2, 10, 20, 38 and 128 eigenfaces:   the top and the below pictures  are reconstructed  by   2DCPCA and SR-2DCPCA, respectively. 
In Fig. $\ref{f:Recon_ratios}$, we plot the   ratios of image reconstruction by   2DCPCA (top) and SR-2DCPCA (below)  with the number of eigenfaces changing from 1 to 128. The difference in the ratio between the two method is shown in Fig. $\ref{f:Recon_subratios}$ (the ratio of SR-2DCPCA subtracts that of 2DCPCA). These results   indicate that  both 2DCPCA and SR-2DCPCA are convenient to reconstruct color face images from projections, and can reconstruct original color face images with choosing all the eigenvectors to span the eigenface subspace.

\begin{figure*}[!t]
  \centering
    \includegraphics[height=0.5\textwidth,width=0.8\textwidth]{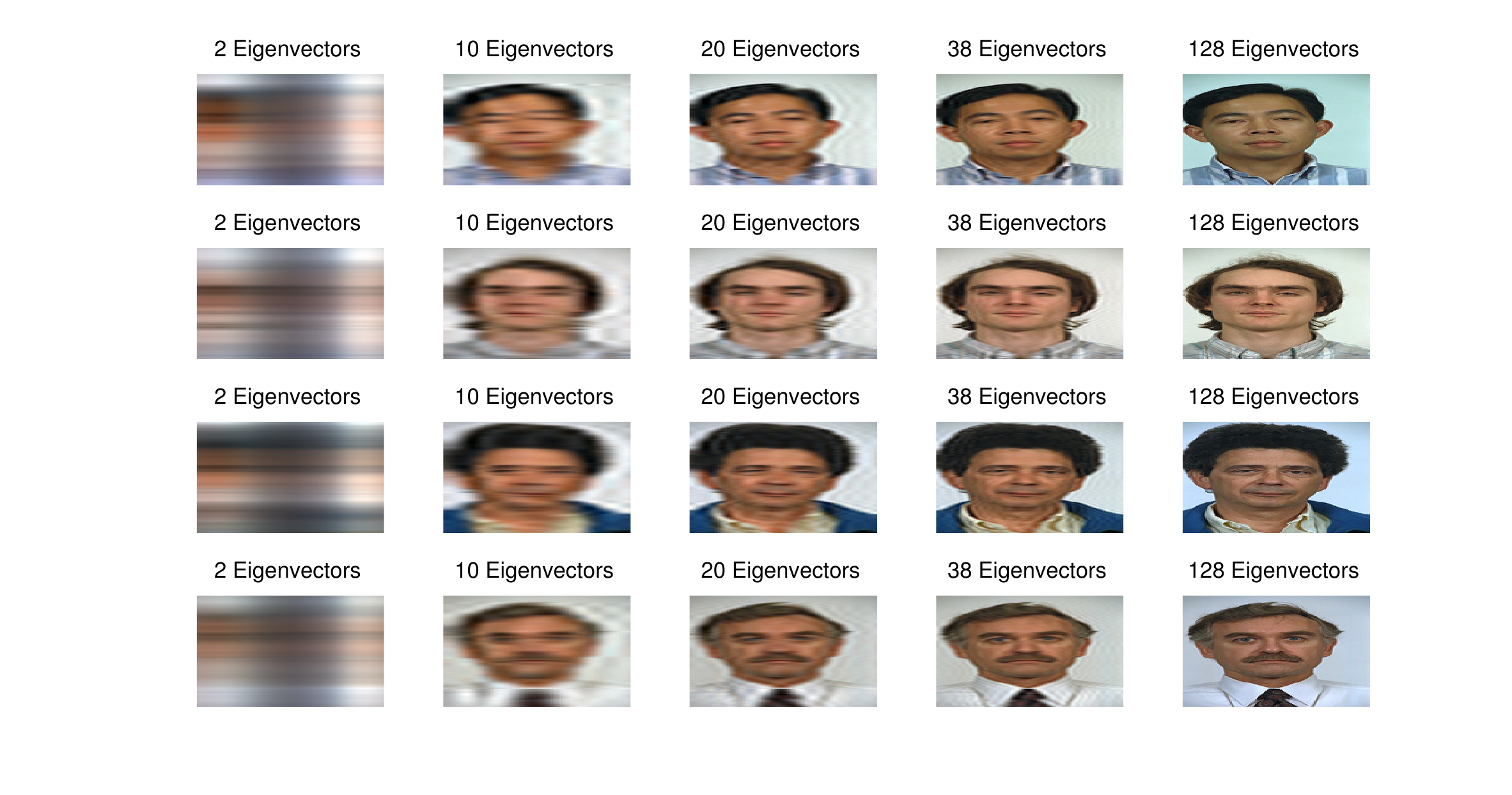}
      \includegraphics[height=0.5\textwidth,width=0.8\textwidth]{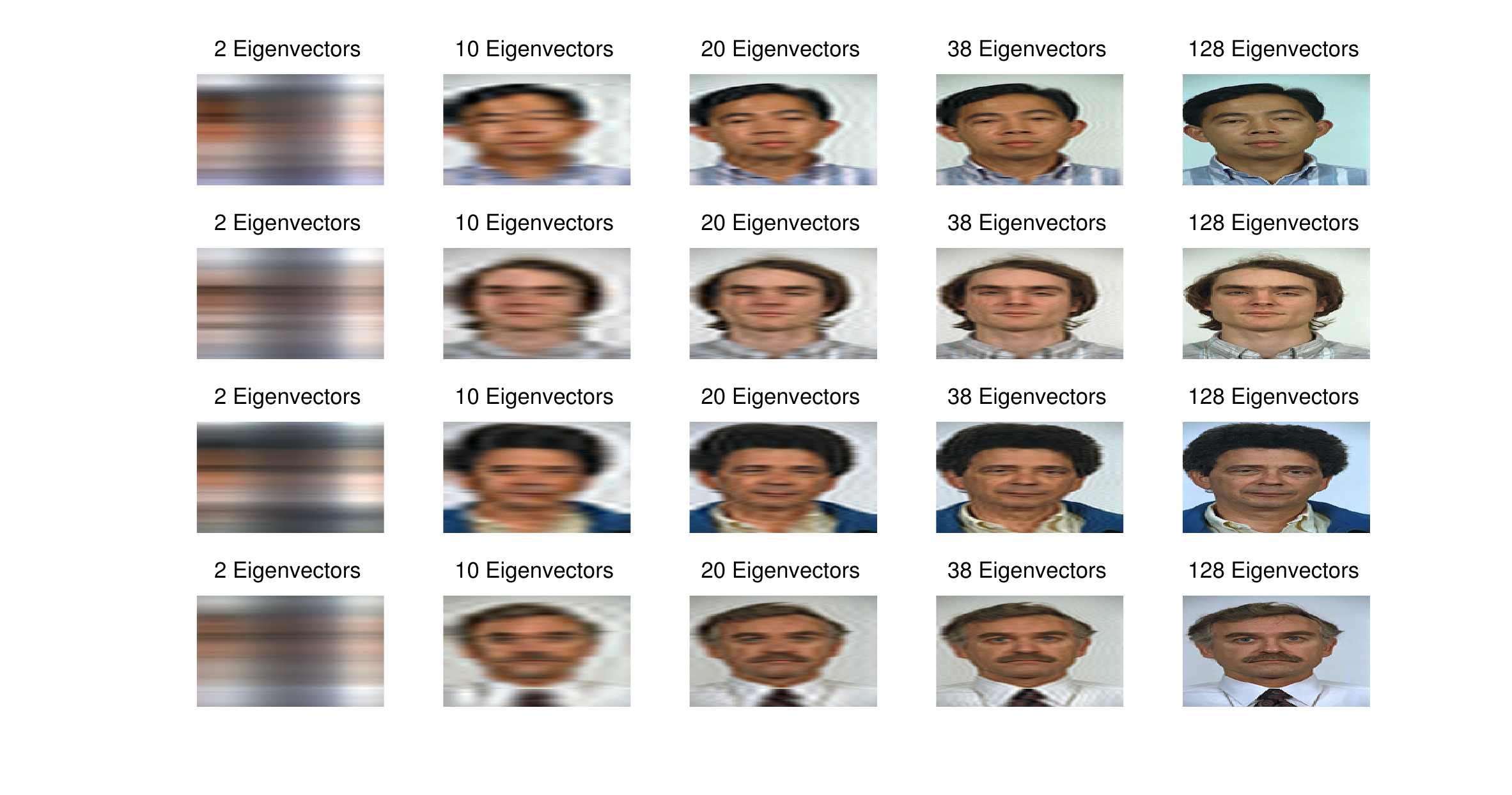}
\caption{Reconstructed  images by 2DCPCA (top) and SR-2DCPCA (below).}
\label{f:Recon_SR2DCPCA}

\end{figure*}

 \begin{figure*}[!t]
  \centering
    \includegraphics[height=0.5\textwidth,width=0.8\textwidth] {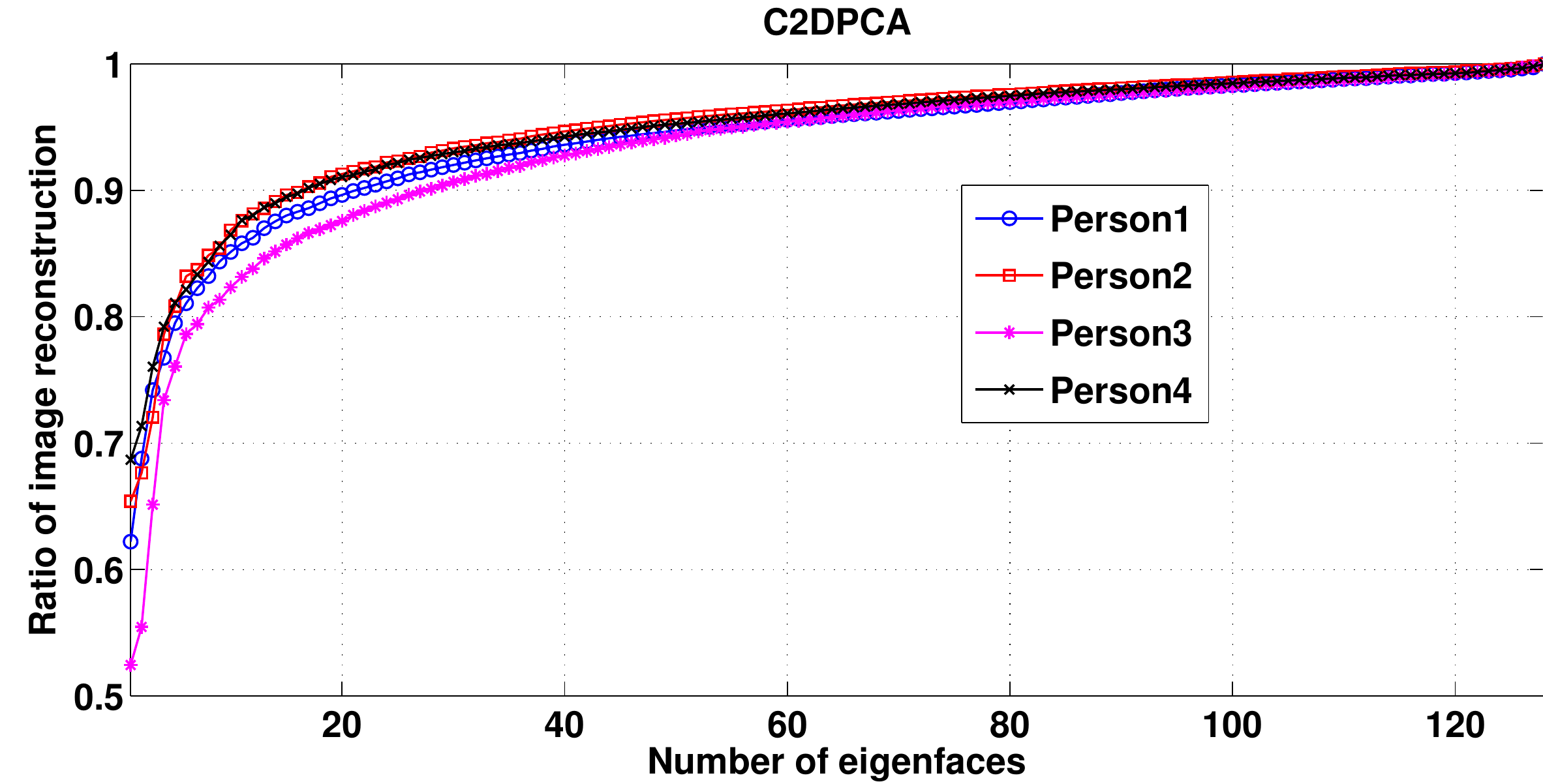}
    \includegraphics[height=0.5\textwidth,width=0.8\textwidth] {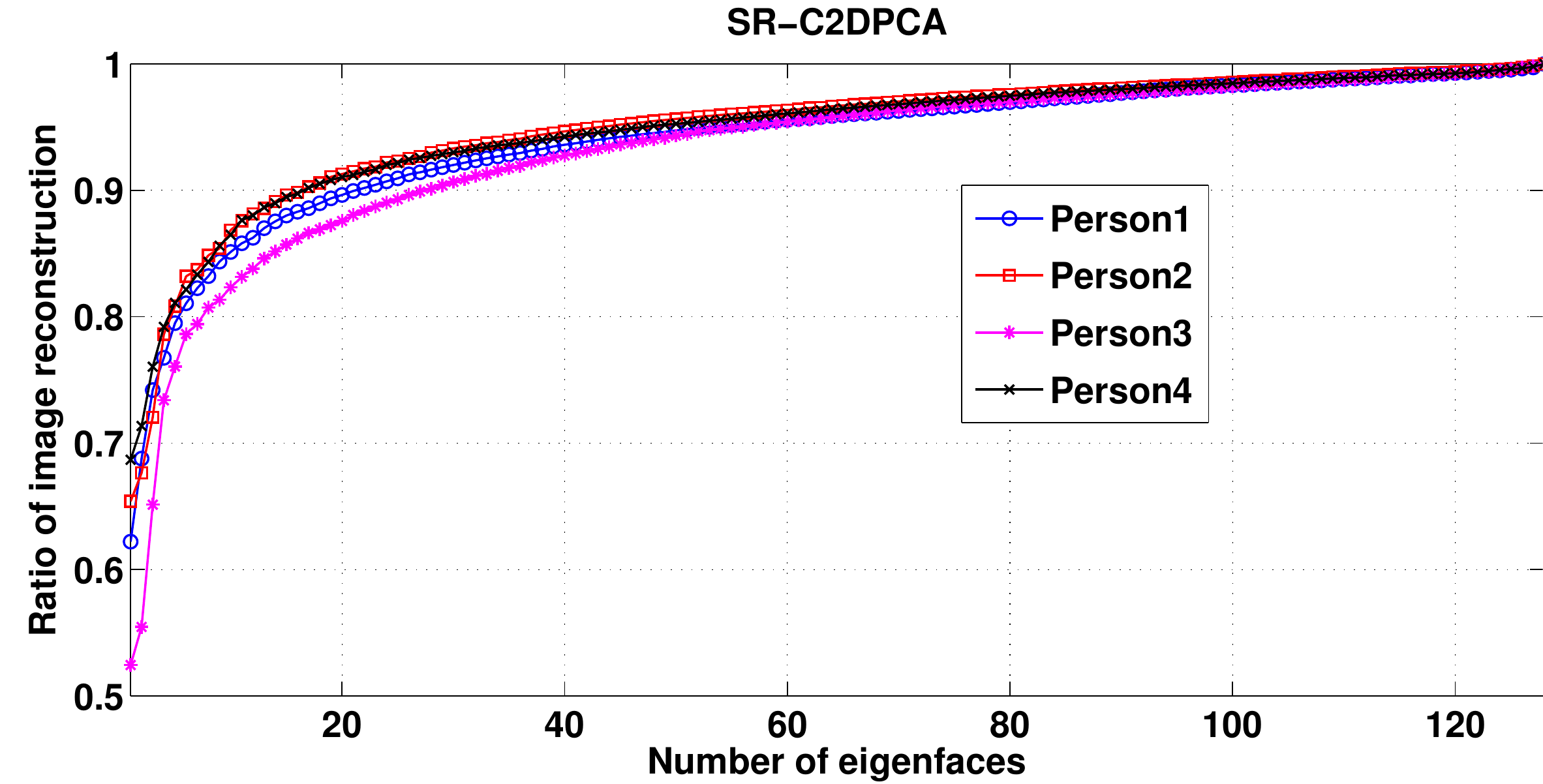}
\caption{Ratio of the reconstructed  images over the original face images.}
\label{f:Recon_ratios}
\end{figure*}

 \begin{figure*}[!t]
  \centering
    \includegraphics[height=0.5\textwidth,width=0.8\textwidth] {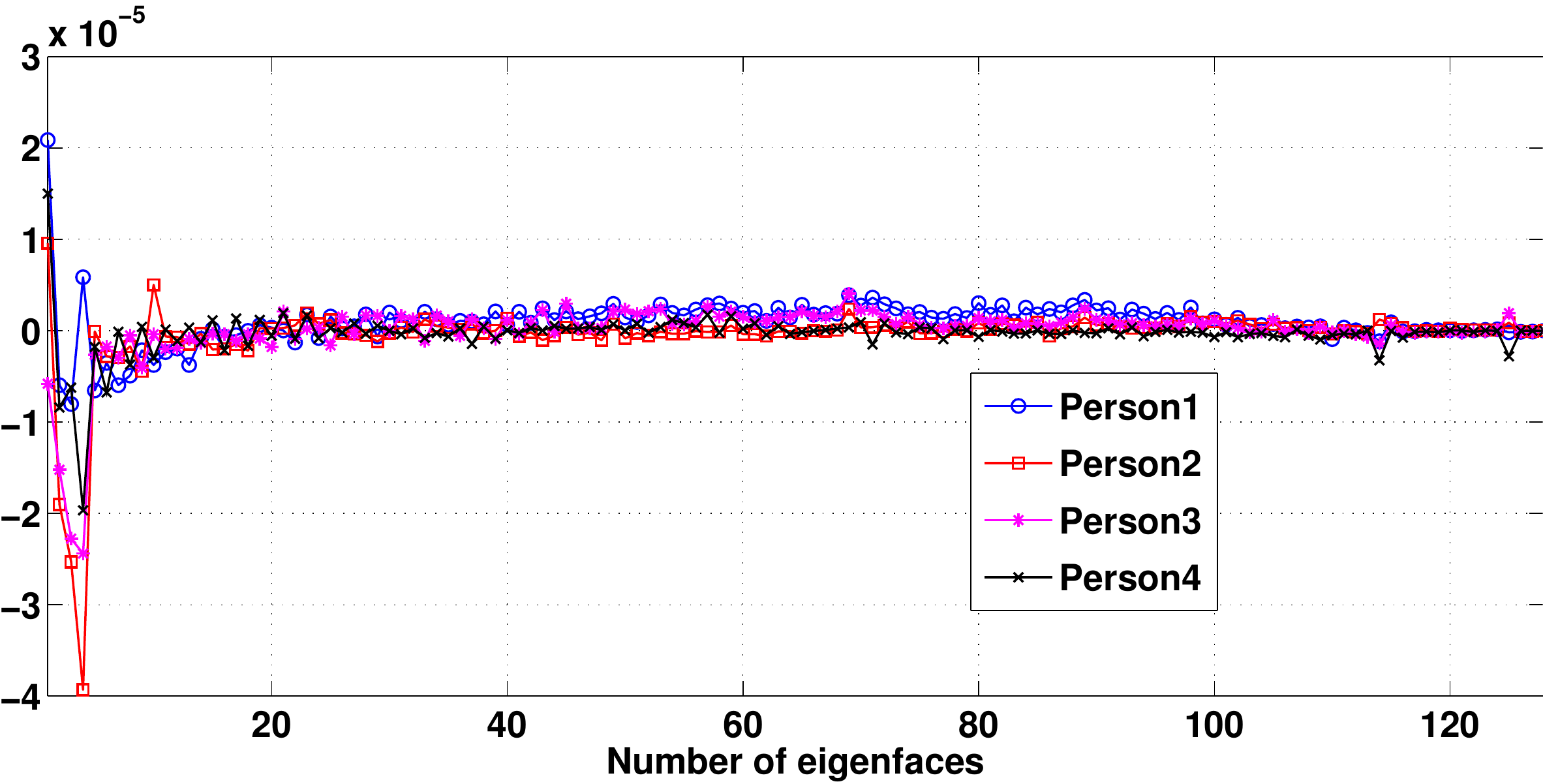}
\caption{The image reconstruction ratio of  SR-2DCPCA  subtracted by  that of 2DCPCA.}
\label{f:Recon_subratios}
\end{figure*}

\end{example}

\section{Conclusion}\label{s:conclusion}
\noindent
In this paper,  SR-2DCPCA is presented  for color face recognition and image reconstruction based on quaternion models.  This novel approach firstly applies label information of training samples, and emphasizes the role of training color face images with the same label which have a large variance.  The numerical experiments indicate that  SR-2DCPCA has a higher face recognition rate than  state-of-the-art methods and is effective in image reconstruction.



\section*{References}

\end{document}